\newtheorem{example}{Example}[section]
\begin{document}

\title{FUNU: Boosting Machine Unlearning Efficiency by Filtering Unnecessary Unlearning}



\author{Zitong Li}
\email{zitong72.li@connect.polyu.hk}
\orcid{0000-0003-3458-9098}
\affiliation{%
  \institution{Hong Kong Polytechnic University}
  \city{Hong Kong}
  \country{China}
}

\author{Qingqing Ye}
\authornote{Corresponding Author.}
\email{qqing.ye@polyu.edu.hk}
\orcid{0000-0003-1547-2847}
\affiliation{%
  \institution{Hong Kong Polytechnic University}
  \city{Hong Kong}
  \country{China}
}

\author{Haibo Hu}
\email{haibo.hu@polyu.edu.hk}
\orcid{0000-0002-9008-2112}
\affiliation{%
  \institution{Hong Kong Polytechnic University}
  \city{Hong Kong}
  \country{China}
}


\begin{abstract}
Machine unlearning is an emerging field that selectively removes specific data samples from a trained model. This capability is crucial for addressing privacy concerns, complying with data protection regulations, and correcting errors or biases introduced by certain data. Unlike traditional machine learning, where models are typically static once trained, machine unlearning facilitates dynamic updates that enable the model to ``forget'' information without requiring complete retraining from scratch. There are various machine unlearning methods, some of which are more time-efficient when data removal requests are fewer.

To decrease the execution time of such machine unlearning methods, we aim to reduce the size of data removal requests based on the fundamental assumption that the removal of certain data would not result in a distinguishable retrained model. We first propose the concept of unnecessary unlearning, which indicates that the model would not alter noticeably after removing some data points. Subsequently, we review existing solutions that can be used to solve our problem. We highlight their limitations in adaptability to different unlearning scenarios and their reliance on manually selected parameters. We consequently put forward FUNU, a method to identify data points that lead to unnecessary unlearning. FUNU circumvents the limitations of existing solutions. The idea is to discover data points within the removal requests that have similar neighbors in the remaining dataset. We utilize a reference model to set parameters for finding neighbors, inspired from the area of model memorization. We provide a theoretical analysis of the privacy guarantee offered by FUNU and conduct extensive experiments to validate its efficacy.
\end{abstract}

\begin{CCSXML}
<ccs2012>
   <concept>
       <concept_id>10002978</concept_id>
       <concept_desc>Security and privacy</concept_desc>
       <concept_significance>500</concept_significance>
       </concept>
 </ccs2012>
\end{CCSXML}

\ccsdesc[500]{Security and privacy}

\keywords{Machine unlearning, Data selection, Data prototype}



\maketitle

\section{Introduction}
\label{sec:introduction}

In the digital age, where vast amounts of personal information are collected and processed on the web, the issue of data privacy has become increasingly prominent. To comply with privacy regulations that safeguard individuals' right to be forgotten~\cite{GDPR2016a, CCPA}, machine unlearning~\cite{cao2015towards} has been proposed to remove specific data samples from a trained model. It can also be used to remove harmful data from the model, thereby mitigating potential privacy risks~\cite{cao2018efficient, wang2019neural, MTLLeak, RFTrack}.

While most methods aim to approximately unlearn a model without retraining it from scratch, some of them would take longer time to unlearn as the size of data removal requests grows\cite{bourtoule2021machine, chen2022graph, wu2020deltagrad, he2021deepobliviate, schelter2021hedgecut}. For instance, SISA (Sharded, Isolated, Sliced, and Aggregated)~\cite{bourtoule2021machine} is one of these methods. It partitions the entire dataset into slices and trains sub-models on these slices. When data removal requests are received, only the sub-models of the affected slices are retrained, thereby avoiding a complete retraining. However, as the number of removal requests increases, more slices are affected, potentially leading to longer unlearning time.

However, we argue that many removal requests are unnecessary, as there is redundant information among similar samples. In other words, if a sample has similar neighbors in the remaining dataset, unlearning it would not produce a model distinguishable from the model before unlearning. Figure \ref{fig:intro} illustrates an example.

\begin{example}
Consider a recommender system trained on user data. Suppose there are ten users with similar purchase records as Amy, whereas for Bob, no other user has similar purchase records. As such, removing Amy's data would become unnecessary because it would have a smaller impact on the model than removing Bob's data, and other users with similar records to Amy would contribute to the model as if Amy's data still existed. 

\begin{figure}[h]
  \vspace{-1em}
  \centering
  \includegraphics[width=0.7\linewidth]{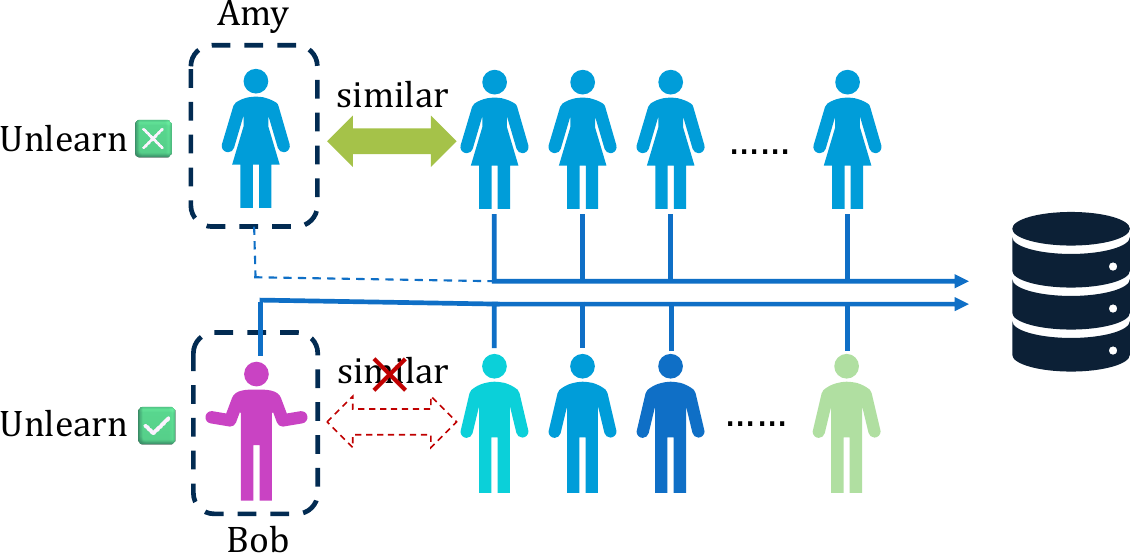}
  \caption{Example of unnecessary unlearning}
  \label{fig:intro}
  \vspace{-1em}
\end{figure}

\end{example}

In this work, we aim to detect those samples in data removal requests that would result in unnecessary unlearning. Existing works~\cite{gargmemorization, carliniquantifying, wei2024memorization} on data prototype discovery, as reviewed in Section \ref{sec:alternative_methods}, can be applied to our problem. However, these approaches are not designed for machine unlearning settings and have two primary limitations. First, they lack the adaptability to different types of unlearning scenarios, such as random removal and class removal. Second, their performance heavily depends on the internal parameters, and manual tuning is required to select them. 

To overcome these limitations, we further propose FUNU (\underline{F}iltering \underline{UN}necess-ary \underline{U}nlearning), which addresses these limitations in two aspects. First, FUNU adopts a distance measure between the remaining dataset and the removal requests to find samples that result in unnecessary unlearning, so it can adapt to different unlearning scenarios. Second, FUNU trains a reference model in one epoch and uses it for parameter tuning, especially the similarity thresholds, without manual intervention. To guarantee that FUNU satisfies the definition of approximate machine unlearning, we also analyze the theoretical bound of the output distance between the model generated by FUNU and a retrained model. Experimental results demonstrate that FUNU can enhance the efficiency of those machine unlearning methods whose cost is proportional to the size of data removal requests while meeting the same commitment of the right to be forgotten.

Besides, for other machine unlearning methods whose cost does not proportionally increase~\cite{guo2020certified, sekhari2021remember, golatkar2020forgetting, golatkar2021mixed}, FUNU can also benefit them. For example, many such methods are constrained by a deletion capability~\cite{sekhari2021remember} that limits the total number of removal requests they can accommodate under a privacy guarantee. Since FUNU reduces the number of removal requests, the validity of such methods is extended. 
In addition, FUNU also aligns with the robustness of the model, as the training process is intended to produce a model that performs stably when the dataset is slightly perturbed~\cite{Xu2010RobustnessAG}. To conclude, our main contributions are as follows.


\begin{itemize}[leftmargin=10pt, noitemsep, topsep=2pt]
\item We are the first to point out the unnecessary unlearning phenomenon and propose to enhance the efficiency of machine unlearning methods by exploiting it.
\item We put forward FUNU to filter samples that lead to unnecessary unlearning. FUNU avoids the limitations of existing solutions. It adapts to different unlearning scenarios and does not require prior knowledge to choose parameters.
\item We conduct a theoretical analysis to guarantee that the model generated by FUNU is close to the retrained model. We also perform experiments to demonstrate that FUNU can indeed improve the efficiency of machine unlearning methods.
\end{itemize}


\section{Problem setting and existing solutions}
\label{sec:preliminaries}

In this section, we formulate the concept ``unnecessary unlearning'', and then introduce the problem we aim to address in this paper. We also present a few baseline solutions adapted from existing works and show their limitations.

\subsection{Problem setting}
\label{sec:problem_formulation}

In the context of machine unlearning, let $A$ denote the training process, so $A(D)$ trains a model on dataset $D$.  we start with an original model $M_o$ trained on the complete dataset $D_o$, i.e., $M_o = A(D_o)$. Subsequently, data removal requests $D_u$, also referred to as the unlearning dataset, are received. The model trained on remaining dataset $D_r = D_o \backslash D_u$ is denoted as $M_r = A(D_r)$.

To describe the similarity between models, we use readout functions~\cite{golatkar2020eternal, golatkar2021mixed, golatkar2020forgetting}. Readout functions $f(M)$ indicate the information that can be extracted given a model $M$. Common readout functions include model output, accuracy on the model prediction, etc. $Dist(\cdot)$ is the distance measurement to quantify the distance between the outputs of readout functions for different models. It can be measured using metrics such as KL-divergence, norm, etc. With these components, the definition of unnecessary unlearning is as follows.

\begin{definition}[Unnecessary Unlearning]
Given the remaining dataset $D_r$, the data removal requests $D_u$, and the training process $A$, the unlearning process over $D_u$ is $\epsilon$-unnecessary unlearning if $$Dist(f(A(D_r)), f(A(D_r \cup D_u))) \le \epsilon$$
\end{definition}

In this definition, $Dist(f(A(D_r)), f(A(D_r \cup D_u)))$ aligns with the definition of previous works ~\cite{golatkar2020eternal, golatkar2021mixed, golatkar2020forgetting}. However, these works employ different choices for $Dist(\cdot)$ and the readout functions. The underlying idea of this definition is that, an unlearning process that would not produce a model distinguishable (as measured by $\epsilon$) from the original one, is considered unnecessary unlearning. 

Now we formulate the problem in this paper as follows. Given a remaining dataset $D_r$, an unlearning dataset $D_u$, and the training process $A$, how can we \textbf{find a subset $D_u^+ \in D_u$, so that unlearning over $D_u^+$ would still guarantee $\epsilon$-unnecessary unlearning?} 


After filtering $D_u^+$ from $D_u$, the unlearning process would continue to be performed on the remaining removal requests $D_u^- = D_u \backslash D_u^+$. Since $|D_u^-|$ is smaller than $|D_u|$, unlearning $D_u^-$ would consume less time and resources.

\subsection{Solutions by data prototype discovery}
\label{sec:alternative_methods}

A highly relevant problem to unnecessary unlearning is data prototype discovery, which finds typical samples that best represent the whole dataset~\cite{carlini2019distribution}. The discovered prototypes can be regarded as the summary of the dataset to enhance training efficiency or to explain model behavior~\cite{NEURIPS2018_8a7129b8}. Table \ref{tab:ExistingPrototype} summarizes existing works in data prototype discovery. They can be categorized into five types.
\begin{enumerate}[leftmargin=10pt, noitemsep, topsep=2pt]
\item
For methods in \cite{stock2018convnets}, \cite{Carlini2022ThePO} and the adv indicator proposed in \cite{carlini2019distribution}, Membership Inference Attack (MIA) or adversarial attack is required to decide the typicality of data points. However, MIA and adversarial attacks need to train additional models, which could incur costs comparable to retraining a model, contradicting the objective of machine unlearning and unnecessary unlearning. 

\item For methods in \cite{feldman2020does}, \cite{feldman2020neural}, priv, ret and agr\cite{carlini2019distribution}, they select prototypes by testing on models trained without the selected data, which inherently includes a retraining process in their methodology, so they conflict with the objective of machine unlearning.

\item For methods in \cite{li2018deep, maini2022characterizing, NEURIPS2018_8a7129b8}, they attempt to modify model architecture or alter the training process, which is impractical in machine unlearning where the model has finished training. 
    
\item Method in ~\cite{kim2016examples} involves searching prototypes by an NP problem, which would lead to high computation costs.

\item \textbf{clustering\cite{carlini2019distribution, Campello2015HierarchicalDE}, confidence\cite{carlini2019distribution} and curvature\cite{ravikumar2024unveiling, gargmemorization}} are more efficient data prototype discovery solutions that can be adapted to our problem as detailed below.
\end{enumerate}

\begin{table}[]
\centering
\caption{Existing data prototype discovery methods and their limitations in efficiency}
\label{tab:ExistingPrototype}
\begin{tabular}{lll}
\hline
\textbf{Methods}                                                                                     & \textbf{Main idea}                                                                                                                              & \textbf{Limitations}                                                                                                   \\ \hline
\multirow{5}{*}{\begin{tabular}[c]{@{}l@{}}{\cite{stock2018convnets}}\\ {\cite{Carlini2022ThePO}}\\ {adv\cite{carlini2019distribution}}\end{tabular}}              & \multirow{5}{*}{\begin{tabular}[c]{@{}l@{}}Perform MIA or \\ adversarial attack \\ to see the\\ typicality of a\\ given sample\end{tabular}} & \multirow{5}{*}{\begin{tabular}[c]{@{}l@{}}Require MIA or \\ adversarial \\ attack\end{tabular}}                \\
                                                                                                     &                                                                                                                                                 &                                                                                                                        \\
                                                                                                     &                                                                                                                                                 &                                                                                                                        \\
                                                                                                     &                                                                                                                                                 &                                                                                                                        \\
                                                                                                     &                                                                                                                                                 &                                                                                                                        \\ \hline
\multirow{4}{*}{\begin{tabular}[c]{@{}l@{}}{\cite{feldman2020does}},{ \cite{feldman2020neural}},\\ ret,priv,\\ agr{\cite{carlini2019distribution}}\end{tabular}} & \multirow{4}{*}{\begin{tabular}[c]{@{}l@{}}Compare model \\ stability trained\\ with or without\\ a certain sample\end{tabular}}                & \multirow{4}{*}{\begin{tabular}[c]{@{}l@{}}Require retraining\\ or training\end{tabular}}                                \\
                                                                                                     &                                                                                                                                                 &                                                                                                                        \\
                                                                                                     &                                                                                                                                                 &                                                                                                                        \\
                                                                                                     &                                                                                                                                                 &                                                                                                                        \\ \hline
\multirow{3}{*}{\begin{tabular}[c]{@{}l@{}}{\cite{li2018deep}},{\cite{maini2022characterizing}},\\ {\cite{NEURIPS2018_8a7129b8}}\end{tabular}}               & \multirow{3}{*}{\begin{tabular}[c]{@{}l@{}}Track training \\ process or add\\ additional network\end{tabular}}                                  & \multirow{3}{*}{\begin{tabular}[c]{@{}l@{}}Need to modify the\\ model architecture\\ or training process\end{tabular}} \\
                                                                                                     &                                                                                                                                                 &                                                                                                                        \\
                                                                                                     &                                                                                                                                                 &                                                                                                                        \\ \hline
{\cite{kim2016examples}  }                                                                                             & \begin{tabular}[c]{@{}l@{}}Use a metric to \\ select typical samples\end{tabular}                                                               & \begin{tabular}[c]{@{}l@{}}Need to solve an\\ NP problem\end{tabular}                                                  \\ \hline
\multirow{3}{*}{\begin{tabular}[c]{@{}l@{}}Existing \\ solutions\\ {\cite{ravikumar2024unveiling, gargmemorization,carlini2019distribution}}\end{tabular}}       & \multirow{3}{*}{see Section \ref{sec:alternative_methods}}                                                                                            & \multirow{3}{*}{-}                                                                                                     \\
                                                                                                     &                                                                                                                                                 &                                                                                                                        \\
                                                                                                     &                                                                                                                                                 &                                                                                                                        \\ \bottomrule
\end{tabular}
\vspace{-1em}
\end{table}

\subsubsection{Adaptation to unnecessary unlearning problem}
\label{sec:alternative_methods}


To adapt to our problem, these solutions share a common three-step process. Since they only differ in the first step, we will elaborate on this step in greater detail.

\textbf{Step 1.} Calculate a score $s(x)$ for each sample $x$. The score reflects whether a sample has many similar neighbors in the entire dataset, consistent with the indications of other data prototype discovery methods mentioned previously. In our design, samples with more similar neighbors tend to score lower. The ways to calculate scores for different methods are as follows.

\begin{itemize}[leftmargin=10pt, noitemsep, topsep=2pt]
    \item Clustering. Inspired by \cite{carlini2019distribution, Campello2015HierarchicalDE}, we employ a combination of dimensionality reduction and clustering techniques. We apply t-SNE\cite{van2008visualizing} on the pixel space (for dataset MNIST in our experiment) and ResNet-generated feature space (for datasets CIFAR-10 and CIFAR-100) to project the datasets into two dimensions and cluster with HDBSCAN. $s(x)$ for each sample $x$ is its according outlier score~\cite{campello2015hierarchical} in the clustering process.
    \item Confidence\cite{carlini2019distribution}. Model confidence is the output of the final fully connected layer, indicating the probability of the sample belonging to each class. We use softmax for normalization. To adjust the direction of the score, we assign $1-softmax(c(x))$ as the confidence score of $x$, where $c(x)$ is sample $x$'s belonging class and $softmax(i)$ is the model output after softmax on class $i$.
    \item Curvature\cite{ravikumar2024unveiling, gargmemorization}. The curvature of the network loss around a data point indicates the model's memorization of it. Less rare samples tend to have low curvature. In the original paper~\cite{gargmemorization}, the authors averaged curvatures over many epochs to calculate the final score. However, curvature calculation is timing-consuming (in our experiment, it takes more than one hundred seconds to calculate curvature for one epoch). To maintain competitiveness with other methods, we calculate the curvature only at the second epoch as our $s(x)$. We choose curvature at this epoch because at this stage, the model does not learn too much about the dataset and thus the curvature could reflect how the model reacts to a sample more accurately. Otherwise, if the model were well-trained, it would fit most samples, resulting in uniformly low curvature and it is hard to see the difference.
\end{itemize}

\textbf{Step 2.} Rank the scores $s(x)$ in ascending order. A lower score indicates the sample has more similar neighbors or is more typical within the entire dataset.

\textbf{Step 3.} Select $D_u^+ = \{x | s(x) \le \theta \ and \  x \in D_u\}$. As the parameter $\theta$ requires manual selection, it could lead to extensive testing for different parameter choices. In Section \ref{sec:evaluation_on_FUNU}, we select parameters based on the data distribution and show that the results are sensitive to the choice of parameters.


\subsection{Limitations of existing solutions}
\label{sec:limitation_of_existing_works}

Even though the above solutions are reasonably efficient as baselines, they have the following limitations.

\textbf{Limitation 1.} These methods fail to adapt to different types of unlearning scenarios. For example, they can not automatically adjust in random removal and class removal scenarios. The former is to randomly remove samples regardless of their associated classes, and the latter is to remove all samples of a particular class. Both scenarios are common in practical unlearning applications.

In the class removal scenario, as the removed samples belong to the same class, intuitively the proportion of $D_u^+$ should be smaller than in random removal, because neighbors of samples in the same class are likely remain in this class and are thus also subject to unlearning. Existing works fail to identify this difference and thus would leave out samples supposed to be unlearned. Figure \ref{fig:limitation1} illustrates this. Existing works select $D_u^+$ by evaluating the contribution of each sample in the entire dataset. Specifically, they tend to pick samples that have many similar neighbors in the whole dataset to be $D_u^+$. In this case, as the relationship between the given sample and the entire dataset is fixed, whether a sample would be classified into $D_u^+$ is also fixed. Therefore, the proportion of $D_u^+$ remains fixed under different unlearning scenarios.

\begin{figure}[h]
  \centering
  \vspace{-0.5em}
  \includegraphics[width=0.9\linewidth]{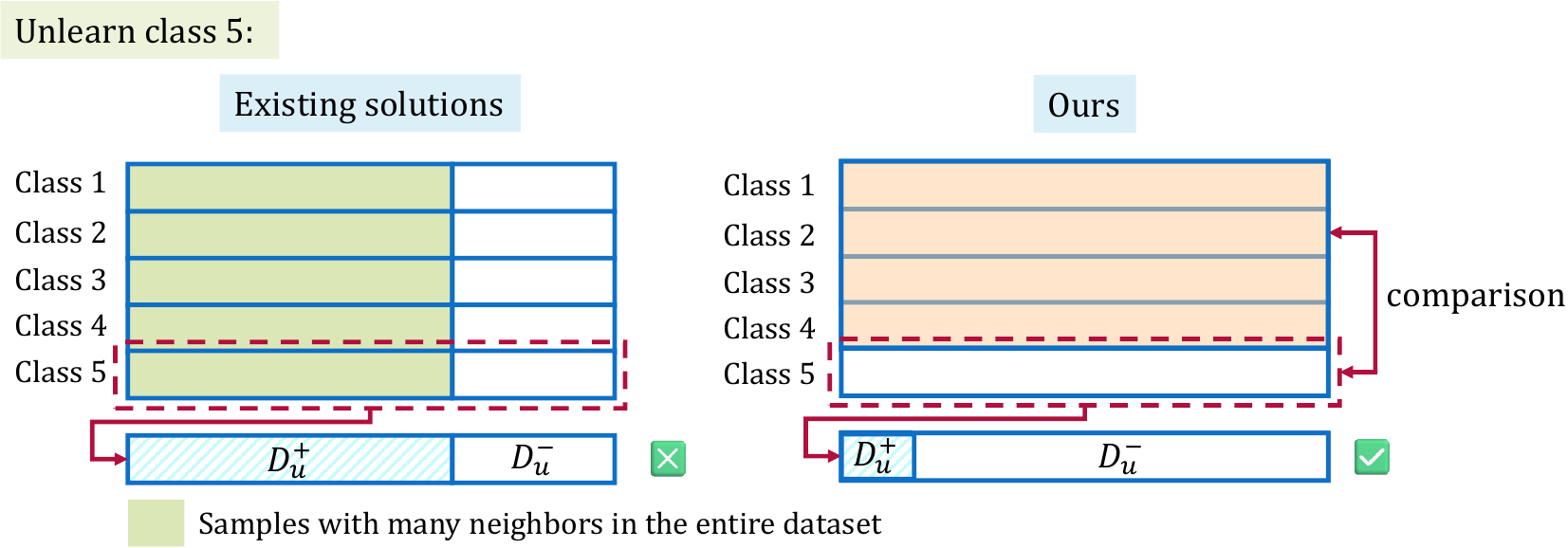}
  \caption{Existing solutions and our method in class removal scenario. As for existing solutions, they tend to select samples with many neighbors in the entire dataset as $D_u^+$. As the proportion of such samples is fixed throughout the dataset, thus the proportion of $D_u^+$ would be consistent with that and be high. However, in our design, as we compare the removal requests with the remaining dataset, thus the proportion of $D_u^+$ would be low.}
  \label{fig:limitation1}
  \vspace{-0.5em}
\end{figure}


We believe the root cause is the difference of motivations. Existing works evaluate the {\bf absolute} contribution of each sample in the \textbf{entire} dataset, and attempt to filter those that contribute less than the remaining removal samples so that the model trained on the latter would resemble the original model $M_o$. In our problem, however, we hope to find and filter those samples that cause a model to resemble the retrained model $M_r$, so we should focus on the {\bf relative} contribution to the \textbf{remaining} dataset.



\textbf{Limitation 2.} These methods require manual parameter selection. Our experiments in Section \ref{sec:performance_similarity} show that they are sensitive to parameters. For instance, a 0.01 decrease in the parameter will reduce the proportion of remaining removal requests by approximately 20\% when applying the method ``curvature''. which makes it challenging to derive or tune the parameter in practice.

\section{FUNU: an unnecessary unlearning filtering method}
\label{sec:AUNU}



FUNU aims to select a subset of unlearning dataset $D_u$ to form $D_u^+$ such that the samples in $D_u^+$ have similar neighbors in $D_r$. The contribution of these samples would be replaced by neighbors in $D_r$, rendering them less significant to the retrained model. Consequently, unlearning operations can bypass these samples.

In detail, FUNU has three steps, as illustrated in Figure \ref{fig:proceedure}. First, calculate the distance matrix based on features generated by the original trained model $M_o$ (Section ~\ref{sec:calculate_distance_matrix}). Second, use a reference model $M_{ref}$ to establish the similarity condition (Section ~\ref{sec:establish_similarity_condition}). If a sample $x$ and a dataset $D$ satisfy this condition, then the sample $x$ would be considered to have sufficient similar neighbors in $D$. Third, we compare samples in $D_u$ with $D_r$ using the similarity condition established in the previous step and select $D_u^+$ (Section ~\ref{sec:filter_removal_requests}).

\begin{figure}[h]
  \centering
  \includegraphics[width=0.85\linewidth]{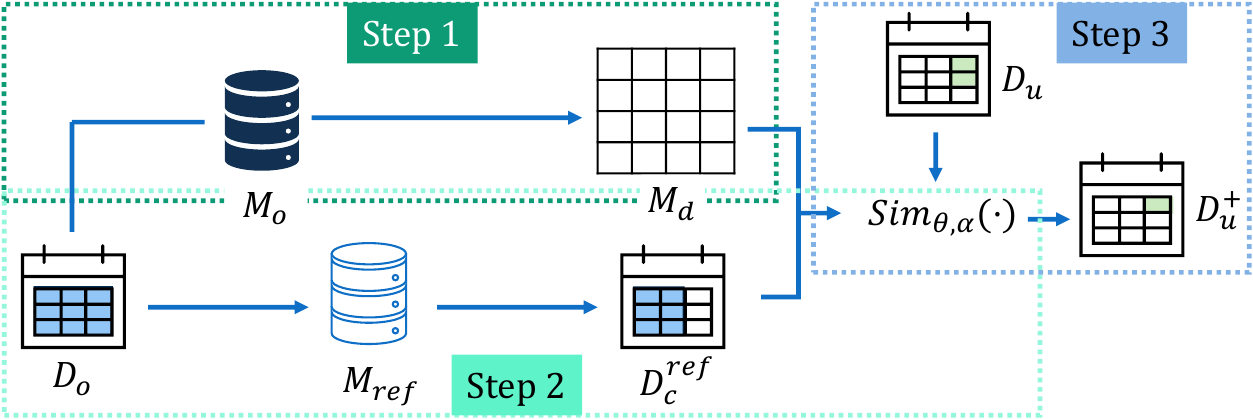}
  \caption{The procedure of FUNU}
  \label{fig:proceedure}
\end{figure}

\subsection{Calculate distance matrix}
\label{sec:calculate_distance_matrix}

The distance matrix, denoted as $M^{d}$, is a matrix whose dimension equals the size of the original dataset, i.e., $M^{d} \in \mathbb{R}^{N \times N}$, where $N = |D_o|$. Each cell of $M^{d}$ represents a distance between two samples, such that $M_{i,j}^{d} = sample\_dist(x_i, x_j)$, where $x_i,x_j \in D_o$.

We now introduce the function above $sample\_dist$ in detail. We first use the original model $M_o$ to generate the feature representation of a given sample $x$. Specifically, we use the model output before the last Fully Connected (FC) layer as the feature representation of the given sample, denoted as $M_{f}(x)$. Next, for each sample pair in $D_o$, we calculate their cosine distance. Combining these two steps, we have $sample\_dist(x_i, x_j) = cos(M_f(x_i), M_f(x_j))$, which constructs the distance matrix. It is important to note that the larger the cosine distance, the more similar the two samples are. The distance matrix is further utilized in the third step.


\subsection{Establish similarity condition}
\label{sec:establish_similarity_condition}

The similarity condition $Sim(x, D)$ determines whether a sample $x$ has a sufficient number of similarity neighbors in dataset $D$, such that the contribution of $x$ to a model trained on $D$ could be considered negligible.

We formulate the similarity condition as follows. $Sim_{\theta, \alpha}(x, D) = True$ if there are more than $\alpha$ samples $y_1,...,y_{\alpha*} ( \alpha* \ge \alpha, y_i \in D)$ that satisfies $sample\_dist(x, y_i) \ge \theta$. If there are enough samples in $D$ considered similar to $x$, $Sim_{\theta, \alpha}(x, D)$ will return true.

The key problem in establishing the similarity condition is to determine the parameters, $\alpha$ and $\theta$. Instead of manually choosing parameters, we use a reference model to find similar samples first and derive the statistics of those samples as the parameters.



Our reference model $M_{ref}$ is acquired as follows. We initialize a model with the same structure as the original model and train it on $D_o$ for one epoch. The resulting model is designated as $M_{ref}$.The choice of $M_{ref}$ is motivated by two key considerations.

First, regarding the efficacy of $M_{ref}$ in addressing our problem, We would like to use it to identify similar samples. Since $M_{ref}$ iterates over the whole dataset for one epoch, it has seen each data sample only once. In this case, we infer the samples correctly predicted by $M_{ref}$ have sufficient similar neighbors in the entire dataset, enabling the model to make correct predictions. This inference is consistent with the findings in model memorization area ~\cite{agarwal2022estimating, wei2024memorization}, which suggest that models tend to learn patterns at the early training stage. $M_{ref}$ is a model at an early stage of training (one epoch). The data samples being predicted correctly could be recognized as contributing to pattern learning, and thus share common characteristics. Second, considering efficiency, our $M_{ref}$ is straightforward to implement. It does not require additional datasets for training and only trains for one epoch, thereby conserving computational resources and time.

Samples correctly predicted by $D_o$ within each class are considered similar. We denote $D_c^{ref}$ as the samples in class $c$ that are correctly predicted by $M_{ref}$. Then we use the statistics of $D_i^{ref}$ across different classes to be parameters for $Sim_{\theta, \alpha}$. Specifically,

{\setlength\abovedisplayskip{1pt plus 3pt minus 7pt}
\setlength\belowdisplayskip{10pt plus 3pt minus 7pt}
\begin{align}
\theta_c &= avg(sample\_dist(x_i, x_j)),  x_i, x_j \in D_c^{ref} \\
\theta &= avg(\theta_c), \  c \in C
\end{align}

}
where $avg(\cdot)$ is the average operation and $C$ is the set of classes. As for the parameter $\alpha$, we first count $\alpha_c$ within each class $c$, which is the count of the sample pairs whose $sample\_dist$ is above the given $\theta$. Then we average $\alpha_c$ across different classes to have the final $\alpha$. This process is formalized below where $|\cdot|$ indicates the size of a given set:

{\setlength\abovedisplayskip{1pt plus 3pt minus 7pt}
\setlength\belowdisplayskip{10pt plus 3pt minus 7pt}
\begin{align}
\alpha_c &= | \{(x_i, x_j) | sample\_dist(x_i, x_j) \ge \theta \} |, x_i, x_j \in D_c^{ref} \\
\alpha &= avg(\alpha_c), \ c \in C
\end{align}
}



\subsection{Filter removal requests}
\label{sec:filter_removal_requests}

We utilize the distance matrix $M^d$ and the parameterized similar condition $Sim_{\theta, \alpha}(x, D)$ to filter samples in $D_u$ that have more than $\alpha$ similar neighbors in $D_r^{c(x)}$, where $D_r^{c(x)}$ is the samples in $D_r$ the belong to the same class $c(x)$ as sample $x$. This process is formalized as follows: $D_u^+ = \{x|x\in D_u \ and \  Sim_{\theta, \alpha}(x, D_r^{c(x)}) = True \}$.


Specifically, we select a sub-matrix of $M^d$, which contains distances between samples in $D_u$ and $D_r$, i.e., the sub-matrix is shaped $|D_u| \times |D_r|$. Then we perform filtering on this sub-matrix using the formalization we mentioned beforehand, comparing the distance of sample pairs with the similarity condition. Selected $D_u^+$ will be removed from $D_u$ and the remaining samples, denoted as $D_u^-$, are those that require unlearning.


It is noteworthy that this step has to be executed whenever data removal requests are received, as the remaining dataset $D_r$ changes each time. As demonstrated in Section \ref{sec:time_cost}, this incurs additional time costs compared to existing solutions, but it is necessary so that our method can adapt to various unlearning scenarios.

\subsection{Privacy guarantee}
\label{sec:privacy_guarantee}

We claim that the $D_u^+$ selected by FUNU satisfies $\epsilon$-unnecessary unlearning in Section \ref{sec:problem_formulation}. To specify the definition of unnecessary unlearning, we choose KL-divergence as the distance measurement. As illustrated in Section \ref{sec:problem_formulation}, the difference between $M_u$ and $M_r$ is that $M_u$ is trained on $D_u^+$ while $M_r$ is not. Consequently, we use the model outputs on $D_u^+$ as the readout function to better illustrate the differences between the two models.

It is important to note that we compare similarities between samples using the features that are generated just before the last FC layer, Thus the relationship between features and outputs can be regarded as linear. Besides, in previous machine unlearning studies that aim to bound the distance between the retrained model and unlearned model, it is a common assumption that model outputs on samples are independent of each other~\cite{guo2020certified, sekhari2021remember, wu2023gif}, i.e., if the training dataset of a model contains a certain sample, then the changes of other samples (removal or addition) would not affect the model output on that sample. Following this convention, we assume that the features produced by $M_o$ are identical to $M_r$ and $M_u$ as the training dataset of $M_o$ covers that of $M_r$ and $M_u$. 

In addition, Given that both $M_u$ and $M_r$ are trained on $D_r$, we assume that $\| log(M_r(x)) - log(M_u(x)) \| \leq \delta$ on $D_r$. This implies that the logarithms of the outputs of $M_u$ and $M_r$ on samples in $D_r$ are bounded by a small value $\delta$. Since we have hypothesized the same features for $M_r$ and $M_u$ before the FC layer, the logarithm of the model output equals the logarithm of the final FC layer output. $\delta$ can be regarded as an estimation of the possible output difference between two models trained on the same dataset. Ideally, the expectation of $\delta$ is zero. 

Based on these prerequisites, we have the following theorem.



\begin{theorem}
    Suppose that for model $M_r$ and $M_u$, the logarithms of final FC layer output, denoted as $log(M_r)$ and $log(M_u)$, are $\lambda_1$-Lipschitz and $\lambda_2$-Lipschitz, and that $\| log(M_r(x)) - log(M_u(x)) \| \leq \delta$ on $D_r$, then 
    $$KL_{D_{u}^{+}} (p_u \| p_r) \leq \epsilon , \epsilon = n [(\lambda_1 + \lambda_2)(\sqrt{2 - 2\theta}) + \delta]$$
    where $p_u$ is the output distribution of $M_u$, and $p_r$ is that of $M_r$. $n$ is the size of $D_u^+$.
\end{theorem}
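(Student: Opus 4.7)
The plan is to reduce the joint KL bound to a per-sample logarithmic discrepancy between $M_u$ and $M_r$, exploit a similar neighbor in $D_r$ to control that discrepancy on each $x\in D_u^+$, and then aggregate over the $n$ samples. By the construction of $D_u^+$ in Section~\ref{sec:filter_removal_requests}, every $x\in D_u^+$ admits some $y\in D_r^{c(x)}$ with $sample\_dist(x,y)\ge\theta$. Since $sample\_dist$ is cosine similarity on the pre-FC features produced by $M_o$, and by the assumption of Section~\ref{sec:privacy_guarantee} these features coincide with those generated by $M_r$ and $M_u$, the normalized feature vectors satisfy $\|\hat{f}(x)-\hat{f}(y)\|_2^2 \le 2-2\theta$, so the Euclidean gap is at most $\sqrt{2-2\theta}$.

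Next, I would insert $y$ as an intermediate point and apply the triangle inequality to $\|\log M_u(x)-\log M_r(x)\|$. The first and third legs $\|\log M_u(x)-\log M_u(y)\|$ and $\|\log M_r(y)-\log M_r(x)\|$ are bounded by $\lambda_2\sqrt{2-2\theta}$ and $\lambda_1\sqrt{2-2\theta}$ using the Lipschitz hypotheses; the middle leg $\|\log M_u(y)-\log M_r(y)\|$ is at most $\delta$ because $y\in D_r$. This yields the per-sample estimate $\|\log M_u(x)-\log M_r(x)\|\le(\lambda_1+\lambda_2)\sqrt{2-2\theta}+\delta$.

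I would then convert the logarithm bound into a KL bound. Viewing $p_u(x)$ and $p_r(x)$ as categorical distributions over classes so that $\log M_u(x)$ and $\log M_r(x)$ are log-probability vectors,
$$KL\bigl(p_u(x)\,\big\|\,p_r(x)\bigr)=\sum_c p_u(x)_c\bigl[\log p_u(x)_c-\log p_r(x)_c\bigr]\le\|\log M_u(x)-\log M_r(x)\|_\infty,$$
which is dominated by the Euclidean norm in finite dimensions; together with Gibbs' inequality this gives a per-sample bound of $(\lambda_1+\lambda_2)\sqrt{2-2\theta}+\delta$. Defining $KL_{D_u^+}(p_u\|p_r)$ as the sum of these per-sample divergences over the $n$ samples in $D_u^+$ then yields the announced $\epsilon$.

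The main obstacle is the last conversion together with keeping the norms coherent along the chain: the Lipschitz hypotheses, the triangle inequality, and the bound on $KL$ (via $\ell_\infty$) must all be interpreted in compatible norms, and I must justify that the \emph{same} neighbor $y$ can be reused for both $M_r$ and $M_u$, which is legitimate only because FUNU's filtering is carried out on $M_o$-features assumed shared by $M_r$ and $M_u$. A secondary subtlety is that the cosine-to-Euclidean translation is only clean after feature normalization, so I should flag explicitly that $\hat{f}$, not $f$, is the object entering the Lipschitz step.
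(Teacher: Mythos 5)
Your proposal is correct and follows essentially the same route as the paper: both insert the neighbor $nei(x)=y\in D_r$ and split $\log M_u(x)-\log M_r(x)$ into the three legs $\lambda_2\sqrt{2-2\theta}+\delta+\lambda_1\sqrt{2-2\theta}$, then sum over the $n$ samples using that $p_u(x)$ is a probability vector. The only (cosmetic) difference is that you perform the triangle inequality on the log-output vectors and then convert to KL via the $\ell_\infty$ bound, whereas the paper decomposes the log-ratio inside the expectation through the intermediate distributions $p_u^{sim}$ and $p_r^{sim}$; your explicit flagging of the feature-normalization step in the cosine-to-Euclidean conversion is a welcome extra precision.
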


The proof of the theorem is in appendix \ref{sec:proof_for_theorem_1.1}.




\section{Experiment}
\label{sec:experiment}

We conduct two sets of experiments. The first experiment is to evaluate our method FUNU with other existing solutions from the aspects of efficiency, adaptivity, and model privacy. The second is to apply FUNU to an unlearning method SISA~\cite{bourtoule2021machine}, demonstrating that FUNU can improve efficiency while preserving model similarity. Our implementation is available at \url{https://anonymous.4open.science/r/unnecessary_unlearning-BEE3}.



\subsection{General experiment setting}
\label{sec:experiment_design}

We followingly introduce datasets, models, and metrics used throughout the experiments. The particular settings for the two experiments are specified in their respective sections. 

\textbf{Datasets and models.} We train models on the datasets as listed in Table ~\ref{tab:datasets_and_models}. We use three datasets: MNIST, CIFAR-10, and CIFAR-100. We employ a 2-layer convolutional neural network (2-layer-CNN) for MNIST and a ResNet-18 model for CIFAR-10 as well as CIFAR-100. We sample 90\% of the complete dataset separately for training our own model and the shadow model in MIA. When training the ResNet-18 model, we optimize the pre-trained model using Stochastic Gradient Descent with a learning rate of 1e-2 for CIFAR-10 and 2e-4 for CIFAR-100, for ten epochs.

\begin{table}
    \centering
    \caption{Datasets and models}
    \label{tab:datasets_and_models}
    \resizebox{0.9\linewidth}{!}{
    \begin{tabular}{llll}
        \toprule
        Datasets & Length & Dimensions & Models\\
        \midrule
        MNIST\cite{lecun1998gradient} & 60,000 & 28×28 &  2-layer-CNN \\
        CIFAR-10\cite{krizhevsky2009learning} & 50,000 & 32×32×3 &  ResNet-18\cite{he2016deep} \\
        CIFAR-100\cite{krizhevsky2009learning} & 50,000 & 32×32×3 &  ResNet-18 \\
        \bottomrule
    \end{tabular}
    }
    \vspace{-1em}
\end{table}

\textbf{Metrics.} We use metrics from three aspects: (1) Time cost. We record the time required to execute different algorithms to illustrate their efficiency. (2) Reduction in the data removal requests. This metric assesses the extent to which the methods can reduce data removal requests. It is quantified by the proportion of remaining removal requests in the original requests, denoted as $P^- = \frac{|D_u^-|}{|D_u|}$. A larger $P^-$ indicates that fewer removal requests have been reduced. (3) Model privacy. Model privacy is measured by the similarity between the model $M_u$ trained on $D_u^+ \cup D_r$ and the retrained model $M_r$ trained on $D_r$. If the two models are similar, then we could conclude that $M_u$ maintains a level of privacy protection comparable to $M_r$. We thereby compare their performance across several metrics~\cite{xu2023machine}, as detailed below, to illustrate their similarity. 

\begin{itemize}[leftmargin=10pt, noitemsep, topsep=2pt]
	\item Model accuracy (acc.). We compare the accuracy of $M_u$ and $M_r$ on the removal $D_u$, remaining dataset $D_r$, and test dataset $D_t$. This comprehensive comparison provides a detailed description of the models' performance.
    \item Accuracy and F1 of MIA \cite{LWHSZBCFZ22, MIAsurvey} on the original data removal requests $D_u$. MIA aims to determine whether a specific data point was part of the training dataset used to build a machine-learning model. By analyzing the model's outputs, the attacker can infer the presence or absence of particular samples. We aim for $M_u$ to achieve similar performance to $M_r$, ensuring that both models react similarly to MIA.
\end{itemize}

All experiments are performed on NVIDIA GeForce RTX 3090 with CUDA Version 12.2 and implemented with Python 3.8.19.


\subsection{Evaluation on FUNU}
\label{sec:evaluation_on_FUNU}

In this section, we compare FUNU with existing solutions from the perspectives of time cost, adaptivity to different unlearning scenarios, and model privacy. 

\subsubsection{Experiment setting.} We provide detailed information regarding the experimental setup as follows.

\textbf{Parameter.} The existing solutions necessitate the manual setting of the parameter $\theta$ which serves as a threshold when selecting $D_u^+$. In our experiments, with the calculated score $s(x)$ for each sample $x$ using the existing solutions, we select three values for $\theta$ based on score distribution: $max(avg(s)-std(s),1e-3)$, $avg(s)$, and $avg(s)+std(s)$, where $avg(s)$ and $std(s)$ represent the average value and standard deviation of score $s(x)$ for all samples, respectively. We choose $max(avg(s)-std(s),1e-3)$ here because in some cases $avg(s)-std(s) \le 0$, thereby we bound the value with 1e-3.

\textbf{Unlearning scenarios.} We consider two unlearning scenarios. The first is random removal, where we randomly select 30, 50, and 100 samples from the original dataset $D_o$ as the removal requests $D_u$. The second is class removal,  where we randomly choose one class and remove half of its samples as $D_u$.

\textbf{Baseline.} To demonstrate model privacy, we compare our method with another method, namely Certified Removal (CR)~\cite{guo2020certified}, to illustrate how close the models generated with our method as well as existing solutions are to the retrained model. It is important to note that CR is not a method for reducing removal requests. Rather, it is a typical unlearning method that directly updates the model parameter and unlearns the given removal requests~\cite{xu2023machine}. Due to memory constraints when running CR for the ResNet-18 model, we apply the method described in ~\cite{Mehta2022Deep} to reduce the model layers to be updated. 

\subsubsection{Time cost}
\label{sec:time_cost}

\begin{figure}[h]
  \centering
  \vspace{-1em}
  \includegraphics[width=0.8\linewidth]{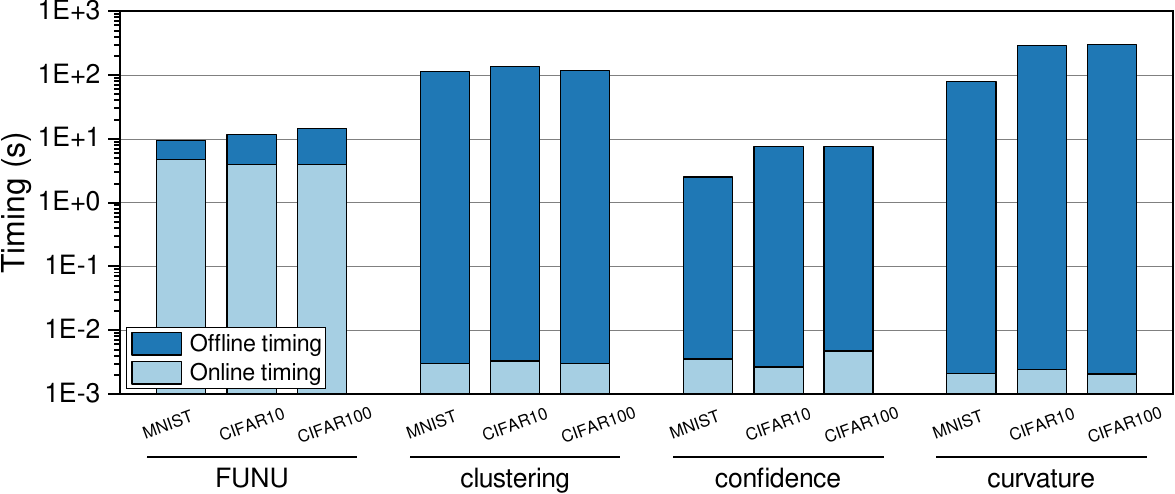}
  \caption{Timing of FUNU and existing solutions}
  \label{fig:timing}
  \vspace{-0.5em}
\end{figure}

We calculate the average execution time of these methods under varying numbers of data removal requests in the random removal scenario and present the results in Figure \ref{fig:timing}.

We divide FUNU and the existing solutions into two stages: the offline and the online stages. For all methods, the offline stage is the first two steps, while the online stage is the third step.

In Figure \ref{fig:timing}, the offline execution time of FUNU is less than that of the existing solutions, whereas the online execution time exceeds the existing solutions. This discrepancy is expected, as during the online stage, the FUNU compares samples in $D_u$ and $D_r$, which involves the operation of separating the distance matrix and filter value with similar conditions, while existing solutions simply select samples whose scores are smaller than the given parameters.

Nevertheless, the total execution time of FUNU remains less than that of two of the existing solutions, clustering and curvature. Besides, the absolute online timing of FUNU is around four seconds, which is still acceptable. 

\subsubsection{Adaptivity to unlearning scenarios}



We calculate $P^-$ under both random removal and class removal scenarios. For existing solutions, we first calculate their average $P^-$ across different parameter choices. Subsequently, we calculate the average $P^-$ for FUNU and the existing solutions over varying numbers of removal requests. The results are presented in Figure \ref{fig:remained-deletion-merge}. 

\begin{figure}[h]
  \centering
  \vspace{-0.5em}
  \includegraphics[width=0.8\linewidth]{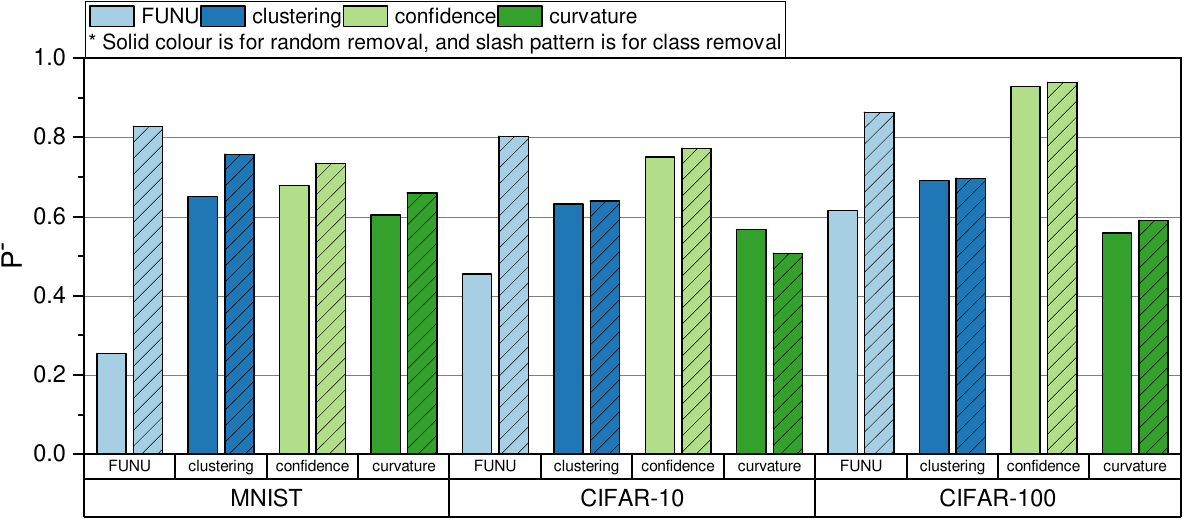}
  \caption{$P^-$ of different methods}
  \label{fig:remained-deletion-merge}
  \vspace{-1.0em}
\end{figure}


 In Figure \ref{fig:remained-deletion-merge}, in random removal, the average $P^-$ of FUNU is 0.4422, while that of clustering is 0.6586, of confidence, 0.7865, and of curvature, 0.5772. The results indicate that our method, FUNU, is more effective in reducing deletion requests for the random removal scenario compared to existing solutions.

Besides, our method demonstrates adaptability to the class removal scenario, addressing the first limitation mentioned in Section \ref{sec:limitation_of_existing_works}. When the removal requests are of the same class, $P^-$ of FUNU increases, indicating that FUNU finds more samples are needed to unlearn as they have fewer similar neighbors in the remaining dataset. In contrast, the existing solutions exhibit no obvious variation between the two different unlearning settings, which is due to their neglect of the relationship between $D_u$ and $D_r$, as we have illustrated in Section \ref{sec:limitation_of_existing_works}.


 
\subsubsection{Model privacy}
\label{sec:performance_similarity}

We use performance similarity between $M_u$ and $M_r$ to illustrate model privacy. Table \ref{tab:FUNU_model_performance_under_random_removal} lists the performance difference. To show the similarity between FUNU (denoted as ``ours'' in table) and retrained model $M_r$ (denoted as ``ret.'' in table), we use another machine unlearning method, CR~\cite{guo2020certified}, as a baseline. Except for the underlined values, our method is closer to the retrained model compared to the baseline. The mean accuracy difference between FUNU and $M_r$ over the three datasets is 0.0134, whereas for CR, it is 0.0535. Similarly,  the mean difference between FUNU's corresponding F1 of MIA and $M_r$ is 0.0407, while for CR, it is 0.1114. These results indicate that the model generated by FUNU is more similar to $M_r$, thereby demonstrating superior model privacy. Nevertheless, we also have to note that due to some samples that are filtered from $D_u$, there will be a discrepancy between the performance of $M_r$ and our unlearned model, as shown in dataset  CIFAR-100 in Table ~\ref{tab:FUNU_model_performance_under_random_removal}.

\begin{table}
\caption{Model performance under random removal}
\label{tab:FUNU_model_performance_under_random_removal}
\begin{tabular}{cllllll}
\toprule
\multicolumn{1}{l}{\textbf{\begin{tabular}[c]{@{}l@{}}Dat-\\ aset\end{tabular}}}& \textbf{\begin{tabular}[c]{@{}l@{}}Met-\\ hod\end{tabular}} & \multicolumn{1}{c}{\textbf{\begin{tabular}[c]{@{}c@{}}Acc.\\ of MIA\end{tabular}}} & \multicolumn{1}{c}{\textbf{\begin{tabular}[c]{@{}c@{}}F1\\ of MIA\end{tabular}}} & \textbf{\begin{tabular}[c]{@{}l@{}}Acc.\\ on $D_r$\end{tabular}} & \textbf{\begin{tabular}[c]{@{}l@{}}Acc.\\ on $D_u$\end{tabular}} & \textbf{\begin{tabular}[c]{@{}l@{}}Acc.\\ on $D_t$\end{tabular}} \\ 
\midrule
                                     & ret.         & 0.5124                                       & 0.4284                                 & 0.9567                                 & 0.9722                  & 0.9602                  \\
                                     & ours            & {\color[HTML]{000000} \underline{0.5374}}       & 0.4732                                 & 0.9570                                 & 0.9703                  & 0.9602                  \\
\multirow{-3}{*}{\begin{tabular}[c]{@{}c@{}}MNIS\\ T\end{tabular}}              & CR              & 0.5194                                       & 0.6133                                 & 0.9564                                 & 0.9967                  & 0.9587                  \\ \hline
                                     & ret.         & 0.5796                                       & 0.6720                                 & 0.9591                                 & 0.7959                  & 0.8068                  \\
                                     & ours            & {\color[HTML]{000000} \underline{0.6019}}       & 0.6983                                 & 0.9606                                 & 0.7985                  & 0.8055                  \\
\multirow{-3}{*}{\begin{tabular}[c]{@{}c@{}}CIFA\\ R-10\end{tabular}}            & CR              & 0.6012                                       & 0.6351                                 & 0.9534                                 & 0.9556                  & 0.8045                  \\ \hline
                                     & ret.         & 0.6750                                       & 0.7048                                 & 0.7840                                 & 0.4430                  & 0.4708                  \\
                                     & ours            & 0.6072                                       & 0.6539                                 & {\color[HTML]{000000} \underline{0.7863}}                                 & 0.5537                  & 0.4712                  \\
\multirow{-3}{*}{\begin{tabular}[c]{@{}c@{}}CIFA\\ R-100\end{tabular}}           & CR              & 0.5250                                       & 0.5922                                 & 0.7832 & 0.7267                  & 0.4744                  \\ 
\bottomrule
\end{tabular}
\vspace{-1.0em}
\end{table}

The performance of models generated using existing solutions is shown in Table \ref{tab:alternative_methods_model_performance_under_random_removal}, along with the chosen parameter $\theta$. For existing solutions, their performance and $P^-$ are sensitive to parameters. Taking curvature as an example, it corresponds to an average rate of change between $P^-$ and the parameter of 20.3589, i.e., a 0.01 decrease in $\theta$ will reduce $P^-$  by approximately 20\%. The average rate of change for confidence is 5.0143, and for clustering, it is 1.7352. This parameter sensitivity can lead to significant instability and inconvenience when applying existing solutions.

Due to page limitation, we have included the model performance for the class removal scenario in  Appendix \ref{sec:experiment_supplymantary_results}. This additional data does not affect the conclusions above.

\begin{table*}
\caption{Model performance of existing solutions under random removal}
\label{tab:alternative_methods_model_performance_under_random_removal}
\begin{tabular}{cclllllll}
\toprule
\multicolumn{1}{l}{\textbf{Dataset}} & \multicolumn{1}{l}{\textbf{Method}} & \textbf{Parameter} & \textbf{$P^-$} & \multicolumn{1}{c}{\textbf{F1 of MIA}} & \multicolumn{1}{c}{\textbf{\begin{tabular}[c]{@{}c@{}}Accuracy\\ of MIA\end{tabular}}} & \textbf{\begin{tabular}[c]{@{}l@{}}Accuracy\\ on $D_r$\end{tabular}} & \textbf{\begin{tabular}[c]{@{}l@{}}Accuracy\\ on $D_u$\end{tabular}} & \textbf{\begin{tabular}[c]{@{}l@{}}Accuracy\\ on $D_t$\end{tabular}} \\ \hline
\multirow{9}{*}{MNIST}               & \multirow{3}{*}{clustering}         & 0.2540             & 0.9100                                                                                                & 0.5328                                                                                 & 0.4358                                 & 0.9570                                                            & 0.9656                                                            & 0.9602                                                            \\
                                     &                                     & 0.1000             & 0.7433                                                                                                & 0.5078                                                                                 & 0.4404                                 & 0.9569                                                            & 0.9722                                                            & 0.9604                                                            \\
                                     &                                     & 0.0010             & 0.3022                                                                                                & 0.5294                                                                                 & 0.4487                                 & 0.9572                                                            & 0.9689                                                            & 0.9604                                                            \\ \cline{2-9} 
                                     & \multirow{3}{*}{confidence}         & 0.1950             & 0.9411                                                                                                & 0.5361                                                                                 & 0.4312                                 & 0.9564                                                            & 0.9900                                                            & 0.9607                                                            \\
                                     &                                     & 0.0620             & 0.8578                                                                                                & 0.5189                                                                                 & 0.3666                                 & 0.9566                                                            & 0.9689                                                            & 0.9602                                                            \\
                                     &                                     & 0.0010             & 0.2378                                                                                                & 0.5378                                                                                 & 0.3401                                 & 0.9566                                                            & 0.9867                                                            & 0.9598                                                            \\ \cline{2-9} 
                                     & \multirow{3}{*}{curvature}          & 0.5400             & 0.9656                                                                                                & 0.5033                                                                                 & 0.4562                                 & 0.9568                                                            & 0.9656                                                            & 0.9606                                                            \\
                                     &                                     & 0.4990             & 0.7722                                                                                                & 0.4822                                                                                 & 0.3493                                 & 0.9569                                                            & 0.9833                                                            & 0.9597                                                            \\
                                     &                                     & 0.4580             & 0.0767                                                                                                & 0.5183                                                                                 & 0.3598                                 & 0.9567                                                            & 0.9756                                                            & 0.9602                                                            \\ \hline
\multirow{9}{*}{CIFAR10}             & \multirow{3}{*}{clustering}         & 0.2990             & 0.8167                                                                                                & 0.5900                                                                                 & 0.6748                                 & 0.9542                                                            & 0.8144                                                            & 0.8021                                                            \\
                                     &                                     & 0.1230             & 0.6722                                                                                                & 0.6000                                                                                 & 0.6914                                 & 0.9481                                                            & 0.7978                                                            & 0.7954                                                            \\
                                     &                                     & 0.0010             & 0.4078                                                                                                & 0.5439                                                                                 & 0.6173                                 & 0.9598                                                            & 0.8578                                                            & 0.8069                                                            \\ \cline{2-9} 
                                     & \multirow{3}{*}{confidence}         & 0.0750             & 0.9100                                                                                                & 0.5722                                                                                 & 0.5950                                 & 0.9504                                                            & 0.7978                                                            & 0.8067                                                            \\
                                     &                                     & 0.0140             & 0.8200                                                                                                & 0.5539                                                                                 & 0.6285                                 & 0.9636                                                            & 0.8733                                                            & 0.8096                                                            \\
                                     &                                     & 0.0010             & 0.5256                                                                                                & 0.5206                                                                                 & 0.6091                                 & 0.9534                                                            & 0.8956                                                            & 0.7988                                                            \\ \cline{2-9} 
                                     & \multirow{3}{*}{curvature}          & 0.6660             & 0.9556                                                                                                & 0.5911                                                                                 & 0.6802                                 & 0.9617                                                            & 0.8044                                                            & 0.8095                                                            \\
                                     &                                     & 0.6520             & 0.7344                                                                                                & 0.5917                                                                                 & 0.6871                                 & 0.9579                                                            & 0.8167                                                            & 0.8090                                                            \\
                                     &                                     & 0.6380             & 0.0133                                                                                                & 0.5356                                                                                 & 0.6652                                 & 0.9526                                                            & 0.9157                                                            & 0.7996                                                            \\ \hline
\multirow{8}{*}{CIFAR100}            & \multirow{3}{*}{clustering}         & 0.2890             & 0.8756                                                                                                & 0.6383                                                                                 & 0.6740                                 & 0.7834                                                            & 0.5033                                                            & 0.4755                                                            \\
                                     &                                     & 0.1180             & 0.7367                                                                                                & 0.5956                                                                                 & 0.6455                                 & 0.7840                                                            & 0.5589                                                            & 0.4709                                                            \\
                                     &                                     & 0.0010             & 0.4633                                                                                                & 0.5767                                                                                 & 0.6471                                 & 0.7893                                                            & 0.6222                                                            & 0.5796                                                            \\ \cline{2-9} 
                                     & \multirow{2}{*}{confidence}         & 0.0160             & 1.0000                                                                                                & 0.6789                                                                                 & 0.7103                                 & 0.7826                                                            & 0.4467                                                            & 0.4705                                                            \\
                                     &                                     & 0.0010             & 0.9067                                                                                                & 0.6061                                                                                 & 0.6389                                 & 0.7874                                                            & 0.4944                                                            & 0.4728                                                            \\ \cline{2-9} 
                                     & \multirow{3}{*}{curvature}          & 0.4060             & 0.9600                                                                                                & 0.6932                                                                                 & 0.7122                                 & 0.7833                                                            & 0.4600                                                            & 0.4710                                                            \\
                                     &                                     & 0.3780             & 0.6689                                                                                                & 0.6250                                                                                 & 0.6768                                 & 0.7872                                                            & 0.5533                                                            & 0.4733                                                            \\
                                     &                                     & 0.3510             & 0.0478                                                                                                & 0.4883                                                                                 & 0.6039                                 & 0.7873                                                            & 0.8067                                                            & 0.4737                                                            \\ \bottomrule
\multicolumn{9}{l}{*CIFAR-100 only has two parameters when applying the confidence method because in this case }\\
\multicolumn{9}{l}{ $max(avg(s)-std(s),1e-3) = avg(s)$. } 
\end{tabular}
\end{table*}

\subsection{Case study: SISA}
\label{sec:case_study_sisa}

We apply FUNU to SISA to demonstrate its capability to reduce unlearning time while preserving model privacy. SISA~\cite{bourtoule2021machine} is a representative machine unlearning method. It initially splits the datasets into different shards and then divides the data in each shard into slices. Next, it trains a sub-model on each shard and sequentially adds training data from each slice. Finally,  it aggregates the results from all sub-models to obtain the final model output. 

In this configuration, when data removal requests are received, SISA first identifies the shards and slices containing the removal requests. It then retrains sub-models previously trained on these influenced shards and slices instead of retraining all sub-models, thereby reducing the time of unlearning.

\subsubsection{Experiment setting. } For the implementation of SISA, we set the number of shards to be five and the number of slices in each shard to be ten. In this case, we would have five sub-models and fifty data slices in total. 


When applying FUNU to SISA, we first use the model which has the same structure as the sub-model and train it on the complete dataset for one epoch to get the reference model, as described in Section ~\ref{sec:establish_similarity_condition}. For the unlearning scenario, We randomly choose 10, 30, and 50 samples as the removal requests $D_u$. 


\subsubsection{Experiment results.} We analyze the results from the perspectives of time cost and model privacy. In Figure \ref{fig:sisa} and Table \ref{tab:sisa},  we refer to the methods ``retrain'' as unlearning the exact data removal requests, and ``ours'' indicates first filtering requests with FUNU.

\textbf{Time cost.} The comparison of time cost between using FUNU or not in SISA is shown in Figure \ref{fig:sisa}. With FUNU, the average timing of applying SISA is reduced by 24\%.  

The number of influenced slices in SISA, denoted as $N_{IS}$, is shown in Table \ref{tab:sisa}. The fewer slices are influenced, the less timing is for unlearning. With FUNU, the data removal requests are pruned, thereby reducing the number of influenced slices. The average decrease of $N_{IS}$ when applying FUNU is 31\%, which aligns with the average decreased time cost.

\begin{figure}[h]
  \centering
  \includegraphics[width=0.8\linewidth]{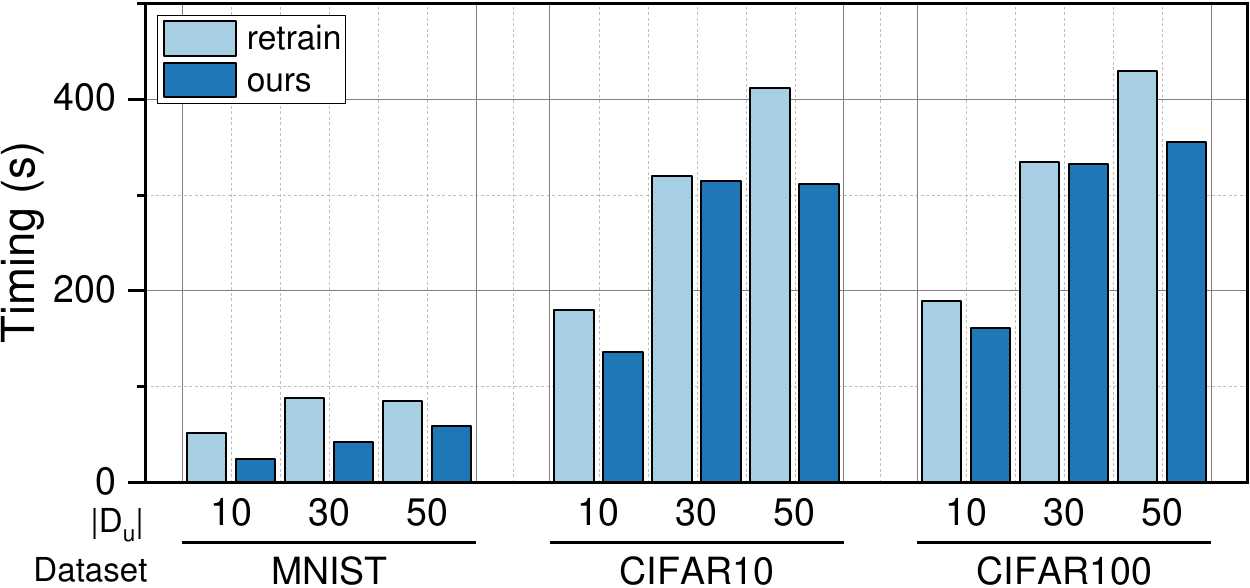}
  \caption{Timing of applying FUNU to SISA}
  \label{fig:sisa}
  \vspace{-0.1in}
\end{figure}

\textbf{Model privacy. } Table \ref{tab:sisa} presents accuracy of both models on $D_t$, $D_u$, and $D_r$. On $D_t$ and $D_r$, the difference between accuracy for the retrained model and our model is less than 2\%. On $D_u$, there is at most one sample where the two models predict differently. As the performance is similar in numerical, we thereby conclude that our method could lead to a model similar to the retrained model, consequently preserving model privacy.

\begin{table}
\caption{Comparison between w/wo applying FUNU to SISA}
\label{tab:sisa}
\begin{tabular}{cclllll}
\hline
\multicolumn{1}{l}{\textbf{Dataset}} & \multicolumn{1}{l}{\textbf{$|D_u|$}} & \textbf{Methods} & \textbf{$N_{IS}$} & \textbf{\begin{tabular}[c]{@{}l@{}}Acc.\\  on $D_t$\end{tabular}} & \textbf{\begin{tabular}[c]{@{}l@{}}Acc.\\  on $D_u$\end{tabular}} & \textbf{\begin{tabular}[c]{@{}l@{}}Acc.\\  on $D_r$\end{tabular}} \\ \hline
\multirow{6}{*}{MNIST}               & \multirow{2}{*}{10}                                                                                 & retrain          & 21                                                                    & 0.9557                                                             & 0.8000                                                             & 0.9482                                                             \\
                                     &                                                                                                     & ours             & 9                                                                     & 0.9523                                                             & 0.8000                                                             & 0.9434                                                             \\ \cline{2-7} 
                                     & \multirow{2}{*}{30}                                                                                 & retrain          & 38                                                                    & 0.9595                                                             & 0.9333                                                             & 0.9530                                                             \\
                                     &                                                                                                     & ours             & 17                                                                    & 0.9543                                                             & 0.9333                                                             & 0.9462                                                             \\ \cline{2-7} 
                                     & \multirow{2}{*}{50}                                                                                 & retrain          & 40                                                                    & 0.9588                                                             & 0.9600                                                             & 0.9529                                                             \\
                                     &                                                                                                     & ours             & 25                                                                    & 0.9569                                                             & 0.9400                                                             & 0.9496                                                             \\ \hline
\multirow{6}{*}{CIFAR10}             & \multirow{2}{*}{10}                                                                                 & retrain          & 17                                                                    & 0.5062                                                             & 0.6000                                                             & 0.5759                                                             \\
                                     &                                                                                                     & ours             & 13                                                                    & 0.5034                                                             & 0.6000                                                             & 0.5740                                                             \\ \cline{2-7} 
                                     & \multirow{2}{*}{30}                                                                                 & retrain          & 33                                                                    & 0.5114                                                             & 0.4333                                                             & 0.5833                                                             \\
                                     &                                                                                                     & ours             & 24                                                                    & 0.5099                                                             & 0.4667                                                             & 0.5799                                                             \\ \cline{2-7} 
                                     & \multirow{2}{*}{50}                                                                                 & retrain          & 47                                                                    & 0.509                                                              & 0.4000                                                             & 0.5869                                                             \\
                                     &                                                                                                     & ours             & 30                                                                    & 0.5101                                                             & 0.5000                                                             & 0.5840                                                             \\ \hline
\multirow{6}{*}{CIFAR100}            & \multirow{2}{*}{10}                                                                                 & retrain          & 17                                                                    & 0.1075                                                             & 0.1000                                                             & 0.1575                                                             \\
                                     &                                                                                                     & ours             & 14                                                                    & 0.1061                                                             & 0.1000                                                             & 0.1604                                                             \\ \cline{2-7} 
                                     & \multirow{2}{*}{30}                                                                                 & retrain          & 33                                                                    & 0.1168                                                             & 0.1000                                                             & 0.1721                                                             \\
                                     &                                                                                                     & ours             & 33                                                                    & 0.1164                                                             & 0.1000                                                             & 0.1724                                                             \\ \cline{2-7} 
                                     & \multirow{2}{*}{50}                                                                                 & retrain          & 47                                                                    & 0.1154                                                             & 0.1400                                                             & 0.1753                                                             \\
                                     &                                                                                                     & ours             & 36                                                                    & 0.1148                                                             & 0.1200                                                             & 0.1724                                                             \\ \bottomrule
\end{tabular}
\end{table}

\section{Related work}
\label{sec:related_work}

Related research fields to this work are machine unlearning and data prototype discovery. 

\textbf{Machine unlearning.} Machine unlearning explores how to obtain a model that closely resembles the retrained model. Since it has been proposed in \cite{cao2015towards}, numerous studies have emerged in this area.  As highlighted in the survey~\cite{xu2023machine}, mainstream research focuses on unlearning methods and verification mechanisms. Unlearning methods are categorized into two primary types: data reorganization~\cite{cao2015towards,bourtoule2021machine,chen2022graph,Graves2020AmnesiacML,Felps2020ClassCD} and model manipulation~\cite{guo2020certified, sekhari2021remember, golatkar2020forgetting, golatkar2021mixed}. The time cost of methods in the former category is more closely related to the size of removal requests. These methods typically manipulate the dataset based on the removal requests, and when there are fewer samples to unlearn, less manipulation is required.

\textbf{Data prototype discovery.} Data prototype discovery aims to identify prototypes that best represent the entire dataset. These prototypes can be used to reduce the training dataset size, thereby accelerating model training ~\cite{Johnson2018Training, Katharopoulos2018NotAS}, or to select potentially mislabeled data~\cite{Pleiss2020Identifying}. As discussed in Section  \ref{sec:limitation_of_existing_works}, such methods have limitations in terms of efficiency, practicality, and adaptability to different types of removal requests, which prevent them from fully addressing our problem.



\section{Discussion and conclusion}
\label{sec:discussion_and_conclusion}

For this work, we have further discussion from the following two perspectives, including the potential impact of our method and the findings complementary to our research. 

\textbf{Can FUNU benefit model-shifting unlearning methods? }Model-shifting methods~\cite{guo2020certified, sekhari2021remember, golatkar2020forgetting, golatkar2021mixed} are a group of unlearning techniques that directly update model parameters and introduce noise to achieve unlearning. For these methods, the efficiency is generally not significantly affected by the number of removal requests. However, these methods would have to consider deletion capacity, which is the maximum number of samples a model can unlearn while maintaining the privacy guarantee~\cite{sekhari2021remember}. FUNU can be beneficial when the number of requests exceeds the deletion capacity by reducing the size of the requests, thereby ensuring that the unlearning process remains protected by privacy guarantees. 



\textbf{Failure in verification vs. necessity to unlearn? } Recent work \cite{pmlr-v235-zhang24h} has highlighted the fragility of current machine unlearning verification methods in some unlearning techniques. Specifically, it has shown that even with some data points not being removed, the model can remain indistinguishable from one that has undergone proper unlearning. This work can be seen as complementary to ours. Some removal requests would not significantly influence the retrained model, making it acceptable not to remove them. Our work primarily focuses on identifying such data points.


In conclusion, we first define unnecessary unlearning, review existing solutions, and then propose the FUNU method, which aims to filter out samples in data removal requests that would probably not lead to a model distinguishable from the retrained model. Theoretically, we prove its privacy guarantee. Empirically, we demonstrated that FUNU outperforms existing solutions in the balance of time cost, adaptability to different unlearning scenarios, and model privacy. We hope that our research can benefit the protection of individuals' right to be forgotten in the big data era.

\begin{acks}
This work was supported by the National Natural Science Foundation of China (Grant No: 62372122, 92270123, and 62372130), Joint Funding Special Project for Guangdong-Hong Kong Science and Technology Innovation (Grant No: 2024A0505040027), and the Research Grants Council, Hong Kong SAR, China (Grant No:  15224124 and 25207224). 
\end{acks}

\bibliographystyle{ACM-Reference-Format}
\bibliography{reference}

\appendix

\section{Proof for Theorem 1.1}
\label{sec:proof_for_theorem_1.1}

\begin{theorem}
    Suppose that for model $M_r$ and $M_u$, the logarithms of final FC layer output, denoted as $log(M_r)$ and $log(M_u)$, are $\lambda_1$-Lipschitz and $\lambda_2$-Lipschitz, and that $\| log(M_r(x)) - log(M_u(x)) \| \leq \delta$ on $D_r$, then 
    $$KL_{D_{u}^{+}} (p_u \| p_r) \leq n [(\lambda_1 + \lambda_2)(\sqrt{2 - 2\theta}) + \delta] $$
    where $p_u$ is the output distribution of $M_u$ on $D_u^{+}$, and $p_r$ is that of $M_r$. $n$ is the size of $D_u^+$.
\end{theorem}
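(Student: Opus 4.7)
The plan is to establish the per-sample bound $\|\log(M_u(x)) - \log(M_r(x))\| \leq (\lambda_1+\lambda_2)\sqrt{2-2\theta} + \delta$ for each $x \in D_u^+$, and then sum over the $n$ samples after converting this into a per-sample KL bound. The three ingredients are the Lipschitz assumptions, the independence convention that lets us treat the features $M_f(x)$ as shared across $M_r$ and $M_u$, and the fact (from Section~\ref{sec:filter_removal_requests}) that every $x \in D_u^+$ has at least one neighbor $y \in D_r^{c(x)}$ with $sample\_dist(x,y) = \cos(M_f(x), M_f(y)) \geq \theta$.

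First, I would fix an arbitrary $x \in D_u^+$ and pick a witness neighbor $y \in D_r^{c(x)}$ with cosine similarity at least $\theta$. On unit-normalized feature vectors, cosine similarity $\geq \theta$ translates directly to Euclidean distance $\|M_f(x) - M_f(y)\| \leq \sqrt{2 - 2\theta}$, which is the standard identity $\|u-v\|^2 = 2 - 2\langle u,v\rangle$ for unit vectors. This is the geometric bridge between the similarity threshold $\theta$ used by FUNU and the Lipschitz hypotheses.

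Next, I would apply the triangle inequality through $y$:
\begin{align}
\|\log M_u(x) - \log M_r(x)\| &\leq \|\log M_u(x) - \log M_u(y)\| \nonumber \\
&\quad + \|\log M_u(y) - \log M_r(y)\| \nonumber \\
&\quad + \|\log M_r(y) - \log M_r(x)\|. \nonumber
\end{align}
The first and third terms are controlled via $\lambda_2$- and $\lambda_1$-Lipschitzness applied to the (shared) features, each giving $\lambda_i \sqrt{2-2\theta}$. The middle term is bounded by $\delta$ by hypothesis, since $y \in D_r$. This yields the per-sample bound $(\lambda_1+\lambda_2)\sqrt{2-2\theta}+\delta$.

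Finally, to move from this log-distance bound to a KL bound on $D_u^+$, I would write $KL_{D_u^+}(p_u \| p_r) = \sum_{x \in D_u^+} \sum_c p_u(x)_c (\log p_u(x)_c - \log p_r(x)_c)$, upper bound the inner sum by $\|\log M_u(x) - \log M_r(x)\|_\infty \sum_c p_u(x)_c = \|\log M_u(x) - \log M_r(x)\|_\infty$, and then sum over the $n$ elements of $D_u^+$. The main obstacle I anticipate is making the feature-space Lipschitz step rigorous: the hypothesis is stated for $\log(M_r)$ and $\log(M_u)$, but the distance we actually control is in the pre-FC feature space, so I must lean on the excerpt's remark that the mapping from features to logits is linear and on the independence assumption that equates $M_o$'s features with those of $M_r$ and $M_u$. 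A minor subsidiary obstacle is justifying the unit-norm reduction, which requires either assuming features are $\ell_2$-normalized before the FC layer or absorbing the scaling into the Lipschitz constants.
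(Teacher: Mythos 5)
Your proposal is correct and takes essentially the same route as the paper: the paper telescopes the KL integrand as $\log\frac{p_u}{p_r} = \log\frac{p_u}{p_u^{sim}} + \log\frac{p_u^{sim}}{p_r^{sim}} + \log\frac{p_r^{sim}}{p_r}$ through the neighbor $nei(x)$, which is exactly your triangle inequality through the witness $y$, with the two Lipschitz terms each contributing $n\lambda_i\sqrt{2-2\theta}$, the middle term contributing $n\delta$, and the weighting by $p_u$ discharged via $M_u(x)\leq 1$ just as you discharge it via $\sum_c p_u(x)_c = 1$. Your explicit flagging of the feature-space versus input-space Lipschitz issue and the unit-norm requirement for the cosine-to-Euclidean conversion is, if anything, more careful than the paper's own treatment.
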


\begin{proof}

According to the definition of KL-Divergence, $$KL_{D_{u}^{+}} (p_u \| p_r) = \mathbb{E}_{D_u^{+}} [p_u log (\frac{p_u}{p_r}) ]$$

As for $\forall x \in D_{u}^{+}$, there are more than $\alpha$ samples in $D_r$ that are 
 similar to $x$. We denote it as a mapping: $nei(x_i) = x_j$, where $x_i \in D_{u}^{+}$ and $x_j$ is one of the samples similar to it in $D_r$. $nei(\cdot)$ is an injection function. Here we calculate the bound when there is only one sample in  $D_r$ similar to $x$. As $\alpha \ge 1$, the bound we derived in the end is for the worst case.

The output distribution of $M_r$ on $nei(D_u^{+})$ is denoted as $p_r^{sim}$, then we have

 \begin{align}
     KL_{D_{u}^{+}} (p_u \| p_r) &= \mathbb{E}_{D_u^{+}} [p_u log (\frac{p_u}{p_r}) ] \\
     &= \mathbb{E}_{D_u^{+}} [p_u log (\frac{p_u}{p_r^{sim}} \cdot \frac{p_r^{sim}}{p_r}) ] \\
     &= \mathbb{E}_{D_u^{+}} [p_u \space log (\frac{p_u}{p_r^{sim}}) + p_u \space log (\frac{p_r^{sim}}{p_r}) ] \\
     &= \mathbb{E}_{D_u^{+}} [p_u \space log (\frac{p_u}{p_r^{sim}})] + \mathbb{E}_{D_u^{+}} [p_u \space log (\frac{p_r^{sim}}{p_r})]
 \end{align}

 We first bound the second term. For each pair $x$ and $nei(x)$, where $x \in D_u^{+}$ and $nei(x) \in D_r$ ) , as $cos(x, nei(x)) \ge \theta$, thus $\| x - nei(x) \| \leq \sqrt{2 - 2\theta}$. Due to that $log(M_r)$ satisfies $\lambda_1$- Lipschitz, $\| log (M_r(x)) - log (M_r(nei(x))) \| \leq \lambda_1 \| x - nei(x) \|_2$. Model outputs the prediction probability of sample $x$, consequently $\| M_u(x) \| \leq 1$. Combine these pieces together, we have

 \begin{align}
      \mathbb{E}_{D_u^{+}} [p_u \space log (\frac{p_r^{sim}}{p_r})] &=  \mathbb{E}_{D_u^{+}} [p_u (\space log (p_r^{sim}) - log (p_r))] \\
      &= \sum_{x \in D_u^{+}} M_u(x) (log (M_r(nei(x))) - log (M_r(x))) \\ 
      &\leq \sum_{x \in D_u^{+}} (log (M_r(nei(x))) - log (M_r(x))) \\ 
      &\leq n \lambda_1 \| nei(x) - x \| \\
      &\leq n \lambda_1 \sqrt{2 - 2\theta} \\ \label{eq1}
 \end{align}

 where $n$ is the size of $D_u^+$.

 As for the first term, we denote the output distribution of $M_u$ on $nei(D_u^{+})$ as $p_u^{sim}$, then we have 

 \begin{align}
     & \mathbb{E}_{D_u^{+}} [p_u \space log (\frac{p_u}{p_r^{sim}})] \\
     &= \mathbb{E}_{D_u^{+}} [p_u \space log (\frac{p_u}{p_u^{sim}} \cdot \frac{p_u^{sim}}{p_r^{sim}})] \\
     &= \mathbb{E}_{D_u^{+}} [p_u \space log (\frac{p_u}{p_u^{sim}})] + \mathbb{E}_{D_u^{+}} [p_u (\space log (p_u^{sim}) - log (p_r^{sim}))] 
\end{align}

Following in the same method in estimating $\mathbb{E}_{D_u^{+}} [p_u \space log (\frac{p_r^{sim}}{p_r})]$, we have $\mathbb{E}_{D_u^{+}} [p_u \space log (\frac{p_u}{p_u^{sim}})] \leq n \lambda_2 \sqrt{2 - 2\theta}$. 

For $\mathbb{E}_{D_u^{+}} [p_u (\space log (p_u^{sim}) - log (p_r^{sim}))]$, as $\| log(M_r(x)) - log(M_u(x)) \| \leq \delta$, thereby

\begin{align}
   & \mathbb{E}_{D_u^{+}} [p_u (\space log (p_u^{sim}) - log (p_r^{sim}))] \\
	= &\sum_{x \in D_u^{+}} M_u(x) (log (M_u(nei(x))) - log (M_r(nei(x))) \\
     \leq& \sum_{x \in D_u^{+}} (log (M_u(nei(x))) - log (M_r(nei(x))) \\
     \leq& n \| log (p_u^{sim}) - log (p_r^{sim}) \| \\
     =& n\delta
 \end{align}

 Consequently, 

 \begin{align}
     \mathbb{E}_{D_u^{+}} [p_u \space log (\frac{p_u}{p_r^{sim}})] \leq n \lambda_2 \sqrt{2 - 2\theta} + n\delta \label{eq2}
 \end{align}

With Eq. (\ref{eq1}) and Eq. (\ref{eq2}), we have 

$$KL_{D_{u}^{+}} (p_u \| p_r) \leq n [(\lambda_1 + \lambda_2)(\sqrt{2 - 2\theta}) + \delta] $$

\end{proof}

\section{Experiment supplymantary results}
\label{sec:experiment_supplymantary_results}

We present performance similarity between the model $M_u$ produced with our method FUNU and the retrained model $M_r$ under class removal scenario in Table \ref{tab:FUNU_model_performance_under_class_removal}. To showcase the similarity, we also use another machine unlearning method CR~\cite{guo2020certified} as a baseline. 

Except for the underlined values, in other metrics and datasets, our method is closer to the retrained model compared to the baseline. The mean accuracy difference between FUNU and $M_r$ over the three datasets was 0.0151, while the CR was 0.0708. The mean difference between FUNU's corresponding F1 of MIA and $M_r$ was 0.0686, while the CR was 0.0847. Thus the model generated by FUNU is more similar to $M_r$.

The performance of models generated using existing solutions is shown in Table \ref{tab:alternative_methods_model_performance_under_class_removal}, along with the parameters used. For existing solutions, their performance and $P^-$ are sensitive to thresholds. For the method ``curvature'', it corresponds to an average rate of change between $P^-$ and a threshold of 19.4723, i.e., a 0.01 decrease in the threshold will decrease P- by about 20\%. While the average rate of change for confidence is 4.3357 and for clustering is 1.5018. This instability with thresholds is consistent with our conclusion in Section \ref{sec:performance_similarity}.

\begin{table*}
\caption{Model performance under class removal}
\label{tab:FUNU_model_performance_under_class_removal}
\begin{tabular}{cllllll}
\toprule
\textbf{Dataset}           & \textbf{Method} & \multicolumn{1}{c}{\textbf{\begin{tabular}[c]{@{}c@{}}Acc.\\ of MIA\end{tabular}}} & \multicolumn{1}{c}{\textbf{F1 of MIA}} & \textbf{\begin{tabular}[c]{@{}l@{}}Acc.\\ on $D_r$\end{tabular}} & \textbf{\begin{tabular}[c]{@{}l@{}}Acc.\\ on $D_u$\end{tabular}} & \textbf{\begin{tabular}[c]{@{}l@{}}Acc.\\ on $D_t$\end{tabular}} \\  \midrule
                           & retrain         & 0.5904                                                                                 & 0.5250                                 & 0.9561                                                            & 0.9751                                                            & 0.9605                                                            \\
                           & ours            & {\color[HTML]{000000} 0.5596}                                                          & \underline{0.3911}                        & 0.9559                                                            & 0.9704                                                            & 0.9602                                                            \\
\multirow{-3}{*}{MNIST}    & CR              & 0.6594                                                                                 & 0.6632                                 & 0.9506                                                            & 0.8923                                                            & 0.9508                                                            \\ \hline
                           & retrain         & 0.6227                                                                                 & 0.6542                                 & 0.9623                                                            & 0.7013                                                            & 0.8021                                                            \\
                           & ours            & {\color[HTML]{000000} 0.5804}                                                          & 0.6783                                 & 0.9532                                                            & 0.7526                                                            & 0.7948                                                            \\
\multirow{-3}{*}{CIFAR10}  & CR              & 0.4562                                                                                 & 0.5803                                 & 0.9508                                                            & 0.9942                                                            & 0.8042                                                            \\ \hline
                           & retrain         & 0.5767                                                                                 & 0.5906                                 & 0.7817                                                            & 0.5787                                                            & 0.5143                                                            \\
                           & ours            & 0.5568                                                                                 & \underline{0.6384}                        & 0.7813                                                            & 0.5568                                                            & \underline{0.4736}                                                   \\
\multirow{-3}{*}{CIFAR100} & CR              & 0.5376                                                                                 & 0.6327                                 & {\color[HTML]{000000} 0.7832}                                     & 0.7700                                                            & 0.4744                                                            \\ \bottomrule
\end{tabular}
\end{table*}

\begin{table*}
\caption{Model performance of existing solutions under class removal}
\label{tab:alternative_methods_model_performance_under_class_removal}
\begin{tabular}{cclllllll}
\toprule
\multicolumn{1}{l}{\textbf{Dataset}} & \multicolumn{1}{l}{\textbf{Method}} & \textbf{Parameter} & \textbf{$P^-$} & \multicolumn{1}{c}{\textbf{F1 of MIA}} & \multicolumn{1}{c}{\textbf{\begin{tabular}[c]{@{}c@{}}Accuracy\\ of MIA\end{tabular}}} & \textbf{\begin{tabular}[c]{@{}l@{}}Accuracy\\ on $D_r$\end{tabular}} & \textbf{\begin{tabular}[c]{@{}l@{}}Accuracy\\ on $D_u$\end{tabular}} & \textbf{\begin{tabular}[c]{@{}l@{}}Accuracy\\ on $D_t$\end{tabular}} \\ \midrule
\multirow{9}{*}{MNIST}               & \multirow{3}{*}{clustering}         & 0.2540             & 0.8370                                                                                               & 0.5660                                                                                 & 0.3834                                 & 0.9555                                                            & 0.9764                                                            & 0.9598                                                            \\
                                     &                                     & 0.1000             & 0.6780                                                                                                & 0.5847                                                                                 & 0.5787                                 & 0.9559                                                            & 0.9763                                                            & 0.9599                                                            \\
                                     &                                     & 0.0010             & 0.4030                                                                                                & 0.5609                                                                                 & 0.3579                                 & 0.9561                                                            & 0.9782                                                            & 0.9593                                                            \\ \cline{2-9} 
                                     & \multirow{3}{*}{confidence}         & 0.1950             & 0.9400                                                                                                & 0.6216                                                                                 & 0.5774                                 & 0.9558                                                            & 0.9820                                                            & 0.9601                                                            \\
                                     &                                     & 0.0620             & 0.8950                                                                                                & 0.5916                                                                                 & 0.5315                                 & 0.9555                                                            & 0.9809                                                            & 0.9609                                                            \\
                                     &                                     & 0.0010             & 0.5340                                                                                                & 0.4024                                                                                 & 0.5808                                 & 0.9563                                                            & 0.9820                                                            & 0.9600                                                            \\ \cline{2-9} 
                                     & \multirow{3}{*}{curvature}          & 0.5400             & 0.9470                                                                                                & 0.5660                                                                                 & 0.4932                                 & 0.9561                                                            & 0.9969                                                            & 0.9607                                                            \\
                                     &                                     & 0.4990             & 0.7240                                                                                                & 0.5682                                                                                 & 0.4613                                 & 0.9555                                                            & 0.9813                                                            & 0.9602                                                            \\
                                     &                                     & 0.4580             & 0.0010                                                                                                & 0.6282                                                                                 & 0.6079                                 & 0.9573                                                            & 0.9820                                                            & 0.9603                                                            \\ \hline
\multirow{9}{*}{CIFAR10}             & \multirow{3}{*}{clustering}         & 0.2990             & 0.8367                                                                                                & 0.5457                                                                                 & 0.6411                                 & 0.9647                                                            & 0.8038                                                            & 0.8105                                                            \\
                                     &                                     & 0.1230             & 0.6849                                                                                                & 0.5559                                                                                 & 0.6694                                 & 0.9344                                                            & 0.8136                                                            & 0.7851                                                            \\
                                     &                                     & 0.0010             & 0.4008                                                                                                & 0.4434                                                                                 & 0.5561                                 & 0.9449                                                            & 0.9263                                                            & 0.7957                                                            \\ \cline{2-9} 
                                     & \multirow{3}{*}{confidence}         & 0.0750             & 0.9379                                                                                                & 0.6144                                                                                 & 0.6346                                 & 0.9652                                                            & 0.8202                                                            & 0.8095                                                            \\
                                     &                                     & 0.0140             & 0.8509                                                                                                & 0.5788                                                                                 & 0.6829                                 & 0.9640                                                            & 0.7501                                                            & 0.8052                                                            \\
                                     &                                     & 0.0010             & 0.5313                                                                                                & 0.4896                                                                                 & 0.6387                                 & 0.9658                                                            & 0.8676                                                            & 0.8076                                                            \\ \cline{2-9} 
                                     & \multirow{3}{*}{curvature}          & 0.6660             & 0.9179                                                                                                & 0.5786                                                                                 & 0.6967                                 & 0.9672                                                            & 0.8114                                                            & 0.8059                                                            \\
                                     &                                     & 0.6520             & 0.5748                                                                                                & 0.4581                                                                                 & 0.5858                                 & 0.9665                                                            & 0.9392                                                            & 0.8087                                                            \\
                                     &                                     & 0.6380             & 0.0293                                                                                                & 0.5282                                                                                 & 0.6645                                 & 0.9515                                                            & 0.9081                                                            & 0.7972                                                            \\ \hline
\multirow{8}{*}{CIFAR100}            & \multirow{3}{*}{clustering}         & 0.2890             & 0.8496                                                                                                & 0.4469                                                                                 & 0.5438                                 & 0.7803                                                            & 0.4159                                                            & 0.4738                                                            \\
                                     &                                     & 0.1180             & 0.6726                                                                                                & 0.5044                                                                                 & 0.6164                                 & 0.7863                                                            & 0.6106                                                            & 0.4791                                                            \\
                                     &                                     & 0.0010             & 0.3451                                                                                                & 0.5797                                                                                 & 0.6374                                 & 0.7827                                                            & 0.6681                                                            & 0.4670                                                            \\ \cline{2-9} 
                                     & \multirow{2}{*}{confidence}         & 0.0160             & 0.9956                                                                                                & 0.4624                                                                                 & 0.5888                                 & 0.7901                                                            & 0.3805                                                            & 0.4646                                                            \\
                                     &                                     & 0.0010             & 0.9159                                                                                                & 0.4735                                                                                 & 0.5897                                 & 0.7827                                                            & 0.5177                                                            & 0.4693                                                            \\ \cline{2-9} 
                                     & \multirow{3}{*}{curvature}          & 0.4060             & 0.9735                                                                                                & 0.5664                                                                                 & 0.6475                                 & 0.7837                                                            & 0.4071                                                            & 0.4728                                                            \\
                                     &                                     & 0.3780             & 0.6858                                                                                                & 0.5398                                                                                 & 0.6376                                 & 0.7801                                                            & 0.5354                                                            & 0.4747                                                            \\
                                     &                                     & 0.3510             & 0.0796                                                                                                & 0.5774                                                                                 & 0.6348                                 & 0.7758                                                            & 0.7788                                                            & 0.4699                                                            \\ \bottomrule
\multicolumn{9}{l}{*CIFAR-100 only have two parameters when applying the confidence method because in this case }\\
\multicolumn{9}{l}{ $max(avg(s)-std(s),1e-3) = avg(s)$. } 
\end{tabular}
\end{table*}

\end{document}